\DeclareMathOperator*{\argmax}{arg\!\max}
\newtheorem{theorem}{Theorem}
\newtheorem{lemma}{Lemma}
\newtheorem{definition}{Definition}
\newcommand{\ignore}[1]{}
\begin{document}
\title{Maximizing Submodular or Monotone Functions under Partition Matroid Constraints by Multi-objective Evolutionary Algorithms}
\shorttitle{Submodular Functions under Partition Matroid Constraints by MOEAs}
%

\author{Anh Viet Do
\\Optimisation and Logistics
\\The University of Adelaide, Adelaide, Australia
\And
Frank Neumann
\\Optimisation and Logistics
\\The University of Adelaide, Adelaide, Australia
}

\maketitle              
\begin{abstract}
Many important problems can be regarded as maximizing submodular functions under some constraints. A simple multi-objective evolutionary algorithm called GSEMO has been shown to achieve good approximation for submodular functions efficiently. While there have been many studies on the subject, most of existing run-time analyses for GSEMO assume a single cardinality constraint. In this work, we extend the theoretical results to partition matroid constraints which generalize cardinality constraints, and show that GSEMO can generally guarantee good approximation performance within polynomial expected run time. Furthermore, we conducted experimental comparison against a baseline GREEDY algorithm in maximizing undirected graph cuts on random graphs, under various partition matroid constraints. The results show GSEMO tends to outperform GREEDY in quadratic run time.

\keywords{Evolutionary Algorithms \and Multi-objective evolutionary algorithms \and Run-time analysis}
\end{abstract}
\section{Introduction}
The area of runtime analysis has made significant contributions to the theory of evolutionary algorithms over the last 25 years~\cite{Auger11,ncs/Jansen13}. Important results have been obtained for a wide range of benchmark functions as well as for important combinatorial optimization problems~\cite{DBLP:books/daglib/0025643}. This includes a wide range of evolutionary computing methods in a wide range of deterministic, stochastic and dynamic settings. We refer the reader to~\cite{BookDoeNeu} for a presentation of important recent research results. 

Many important real-world problems can be stated in terms of optimizing a submodular function and the analysis of evolutionary algorithms using multi-objective formulations has shown that they obtain in many cases the best possible performance guarantee (unless P=NP). Important recent results on the use of evolutionary algorithms for submodular optimization are summarized in~\cite{DBLP:books/sp/ZhouYQ19}.
The goal of this paper is to expand the investigations of evolutionary multi-objective optimization for submodular optimization. While previous investigations mainly concentrated on monotone submodular functions with a single constraint, we consider non-monotone submodular functions with a set of constraints.

\subsection{Related work}

Submodular functions are considered the discrete counterparts of convex functions \cite{submodconvex}. Submodularity captures the notion of diminishing marginal return, and is present in many important problems. While minimizing submodular functions can be done using a polynomial time combinatorial algorithm \cite{SubmodMin}, submodular maximization encompasses many NP-hard combinatorial problems such as maximum coverage, maximum cut \cite{MaxcutSDP}, maximum influence \cite{MaxSpreadInfluence}, and sensor placement problem \cite{NearSensorPlacement,SubInfoGather}. It is also applied in many problems in machine learning domain \cite{FeatureSelection,DataSubsetSelection,DocSum,DocSumBudget,EfficientMinDecompSubmodular}. Considering the role of evolutionary algorithms in difficult optimization problems, we focus on submodular maximization.

Realistic optimization problems often impose constraints on the solutions. In applications of submodular maximization, Matroid and Knapsack constraints are among the most common \cite{NonMonoSubmodularMaxMatroidKnap}. In this work, we consider submodular maximization under partition matroid constraints, which are a generalization of cardinality constraints. This type of constraint has been considered in a variety of applications \cite{CouplingEdgeGraphCut,MaxCoverGroupBudget,TightApproxAlgoMaxGenAsg}.

A greedy algorithm has been shown to achieve $1/2$-approximation ratio in maximizing monotone submodular functions under partition matroid constraints \cite{LocationBankOptFloat}. It was later proven that $(1-1/e)$ is the best approximation ratio a polynomial time algorithm can guarantee. A more recent study \cite{MaxSubmodularMatroid} proposed a randomized algorithm that achieves this ratio. Another study \cite{DeterministicSubMax} analyzes derandomizing search heuristics, leading to a deterministic $0.5008$-approximation ratio.

Additionally, more nuanced results have been reached when limiting objective functions to those with finite rate of marginal change, quantified by curvature $\alpha$ as defined in \cite{GreedyMaxBoundCurvPartMatroid}. The results in \cite{SubGreedGeneralRado,SubCurvOpt} indicate that $\frac{1}{\alpha}(1-e^{-\alpha})$-approximation ratio is achievable by the continuous greedy algorithm in maximizing monotone submodular functions under a matroid constraint. A more recent study \cite{GreedNonmod} proved $\frac{1}{\alpha}(1-e^{-\gamma\alpha})$-approximation ratio for the deterministic greedy algorithm in maximizing functions with submodularity ratio $\gamma$, under a cardinality constraint.

These results rely on the assumption of monotonicity of the objective functions, $f(S)\leq f(T)$ for all $S\subseteq T$, which do not hold in many applications of submodular maximization. A study \cite{GSEMO2015} derives approximation guarantees for GSEMO algorithm in maximizing monotone and symmetric submodular function under a matroid constraint, which suggest that non-monotone functions are harder to maximize. This is supported by another result \cite{GreedyMaxBoundCurvPartMatroid} for a greedy algorithm in maximizing general submodular function under partition matroid constraints. A recent study \cite{MaxMonoApproxSubmodMulti} extends the results for a GSEMO variant to the problems of maximizing general submodular functions, but under a cardinality constraint.

\subsection{Our contribution}
In this work, we contribute to the theoretical analysis of GSEMO by generalizing previous results to partition matroid constraints. Firstly, we provide an approximation guarantee for GSEMO in maximizing general submodular functions under partition matroid constraints (Theorem \ref{gsemo_submodular}). Secondly, we derive another result for monotone and not necessarily submodular functions, under the same constraints (Theorem \ref{gsemo_monotone}), to account for other important types of function like subadditive functions. Subadditivity encompasses submodularity, and is defined by the property where the whole is no greater than the sum of parts. Subadditive functions are commonly used to model items evaluations and social welfare in combinatorial auctions \cite{ItemPrice,WelfareGuaranteesCombAuc,ComposeEffiMech,GreedyMaxBoundCurvPartMatroid}. Our results extend the existing ones \cite{MaxMonoApproxSubmodMulti} with more refined bounds.

We investigate GSEMO's performance against GREEDY's \cite{GreedyMaxBoundCurvPartMatroid} in maximizing undirected cuts in random graphs under varying cardinality constraints and partition matroid constraints. Graph cut functions with respect to vertices sets are known to be submodular and non-monotone \cite{NonMonoSubMaxCut}. In particular, they are also symmetric for undirected graphs \cite{ComMinSymSub}. Our results suggest that GSEMO typically requires more evaluations to reach GREEDY's outputs quality. Nonetheless, GSEMO surpasses GREEDY shortly after the latter stops improving,
indicating the former's capacity for exploring the search spaces. Predictably, GSEMO outperforms GREEDY more reliably in larger search spaces.

The paper is structured as follows. We formally define the problems and the algorithms in Section~\ref{sec2}. In Section~\ref{sec3}, we analyze GSEMO with respect to its approximation behaviour and runtime and report on our experimental investigations in Section~\ref{sec4}. Finally, we finish with some conclusions.

\section{Preliminaries}
\label{sec2}
In this section, we provide a formal definition of the problem and some of its parameters relevant to our analyses. We also describe the simple GREEDY algorithm and the GSEMO algorithm considered in this work.
\subsection{Problem definition}
We consider the following problem. Let $f:2^V\to\mathbb{R}^+$ be a non-negative function over a set $V$ of size $n$, $B=\{B_i\}_{i=1,\dots,k}$ be a partition of $V$ for some $k\leq n$, $D=\{d_i\}_{i=1,\dots,k}$ be integers such that $d_i\in[1,|B_i|]$ for all $i$, the problem is finding $X\subseteq V$ maximizing $f(X)$, subject to
\[|X\cap B_i|\leq d_i,\quad\forall i=1,\dots,k.\]
These constraints are referred to as partition matroid constraints, which are equivalent to a cardinality constraint if $k=1$. The objective function $f$ of interest is submodular, meaning it satisfies the property as defined in \cite{BestAlgoApproxMaxSubmodular}
\begin{definition}
A function $f:2^V\to\mathbb{R^+}$ is submodular if
\[f(X\cup\{v\})-f(X)\geq f(Y\cup\{v\})-f(Y),\quad\forall X\subseteq Y\subseteq V,v\in V\setminus Y.\]
\end{definition}
We can assume that $f$ is not non-increasing, and for monotone $f$, we can assume $f(\emptyset)=0$. To perform analysis, we define the function's monotonicity approximation term similar to \cite{NearSensorPlacement}, but only for subsets of a certain size
\begin{definition}\label{mono_approx}
For a function $f:2^V\to\mathbb{R}^+$, its monotonicity approximation term with respect to a parameter $j$ is
\[\epsilon_j=\max_{X,v:|X|<j}\{f(X\setminus\{v\})-f(X)\},\]
for $j>0$ and $\epsilon_0=0$.
\end{definition}
It is clear that $\epsilon_j$ is non-negative, non-decreasing with increasing $j$, and $f$ is monotone iff $\epsilon_n=0$. Additionally, for monotone non-submodular $f$, we use submodularity ratio which quantifies how close $f$ is to being modular. In particular, we simplify the definition \cite{spectral} which measures the severity of the diminishing return effect.
\begin{definition}\label{submodratio_def}
For a monotone function $f:2^V\to\mathbb{R}^+$, its submodularity ratio with respect to two parameters $i$, $j\geq1$ is
\[\gamma_{i,j}=\min_{|X|<i,|L|\leq j,X\cap L=\emptyset}\frac{\sum_{v\in L}[f(X\cup\{v\})-f(X)]}{f(X\cup L)-f(X)},\]
for $i>0$ and $\gamma_{0,j}=\gamma_{1,j}$.
\end{definition}
It can be seen that $\gamma_{i,j}$ is non-negative, non-increasing with increasing $i$ and $j$, and $f$ is submodular iff $\gamma_{i,j}\geq1$ for all $(i,j)$.

For the purpose of analysis, we denote $d=\sum_id_i$, $\bar{d}=\min_i\{d_i\}$, and $OPT$ the optimal solution; we have $\bar{d}\leq d/k$ and $|OPT|\leq d$. We evaluate the algorithm's performance via $f(X^*)/f(OPT)$ where $X^*$ is the algorithm's output. Furthermore, we use the black-box oracle model to evaluate run time, hence our results are based on numbers of oracle calls.

\subsection{Algorithms descriptions}
A popular baseline method to solve hard problems is greedy heuristics. A simple deterministic GREEDY variant has been studied for this problem \cite{GreedyMaxBoundCurvPartMatroid}. It starts with the empty solution, and in each iteration adds the feasible remaining element in $V$ that increases $f$ value the most. It terminates when there is no remaining feasible elements that yield positive gains. This algorithm extends the GREEDY algorithms in \cite{BestAlgoApproxMaxSubmodular} to partition matroids constraints. Note that at iteration $k$, GREEDY calls the oracle $n-k+1$ times, so its run time is $\mathcal{O}(dn)$. According to \cite{GreedyMaxBoundCurvPartMatroid}, it achieves $(1-e^{-\alpha \bar{d}/d})/\alpha$ approximation ratio when $f$ is submodular, and $(1-e^{-\alpha(1-\alpha)\bar{d}/d})/\alpha$ approximation ratio when $f$ is non-decreasing subadditive, with $\alpha$ being the curvature of $f$.

On the other hand, GSEMO \cite{SEMO,GSEMO,GSEMO2015}, also known as POMC \cite{POMC}, is a well-known simple Pareto optimization approach for constrained single-objective optimization problems. It has been shown to outperform the generalized greedy algorithm in overcoming local optima \cite{POMC}. To use GSEMO with partition matroid constraints, the problem is reformulated as a bi-objective problem
\[\text{maximize}_{X\subseteq V}\left(f_1(X),f_2(X)\right),\]
where $f_1(X)=\begin{cases*}
-\infty,&$\exists i,|B_i\cap X|>d_i$\\
f(X),&otherwise
\end{cases*}$, $f_2(X)=-|X|$.

\begin{algorithm}[t]
\begin{algorithmic}
\STATE \textbf{Input:} a problem instance: $(f,B,D)$
\STATE \textbf{Parameter:} the number of iterations $T\geq0$
\STATE \textbf{Output:} a feasible solution $x\in\{0,1\}^n$
\STATE $x\gets0$, $P\gets\{x\}$
\WHILE{$t<T$}
\STATE Randomly sample a solution $y$ from $P$
\STATE Generate $y'$ by flipping each bit of $y$ independently with probability $1/n$
\IF{$\nexists x\in P,x\succ y'$}
\STATE $P\gets(P\setminus\{x\in P,y'\succeq x\})\cup\{y'\}$
\ENDIF
\ENDWHILE
\STATE \textbf{return} $\argmax_{x\in P}f_1(x)$
\end{algorithmic}
\caption{GSEMO algorithm}
\label{alg:gsemo}
\end{algorithm}
GSEMO optimizes two objectives simultaneously, using the dominance relation between solutions, which is common in Pareto optimization approaches. By definition, solution $X_1$ dominates $X_2$ ($X_1\succeq X_2$) iff $f_1(X_1)\geq f_1(X_2)$ and $f_2(X_1)\geq f_2(X_2)$. The dominance relation is strict ($X_1\succ X_2$) iff $f_1(X_1)>f_1(X_2)$ or $f_2(X_1)>f_2(X_2)$. Intuitively, dominance relation formalizes the notion of ``better'' solution in multi-objective contexts. Solutions that don't dominate any other present a trade-off between objectives to be optimized.

The second objective in GSEMO is typically formulated to promote solutions that are ``further'' from being infeasible. The intuition is that for those solutions, there is more room for feasible modification, thus having more potential of becoming very good solutions. For the problem of interest, one way of measuring ``distance to infeasibility'' for some solution $X$ is counting the number of elements in $V\setminus X$ that can be added to $X$ before it is infeasible. The value then would be $d-|X|$, which is the same as $f_2(X)$ in practice. Another way is counting the minimum number of elements in $V\setminus X$ that need to be added to $X$ before it is infeasible. The value would then be $\min_i\{d_i-|B_i\cap X|\}$. The former approach is chosen for simplicity and viability under weaker assumption about the oracle.

On the other hand, the first objective aims to present the canonical evolutionary pressure based on objective values. Additionally, $f_1$ also discourages all infeasible solutions, which is different from the formulation in \cite{POMC} that allows some degree of infeasibility. This is because for $k>1$, there can be some infeasible solution $Y$ where $|Y|\leq d$. If $f_1(Y)$ is very high, it can dominate many good feasible solutions, and may prevent acceptance of global optimal solutions into the population. Furthermore, restricting to only feasible solutions decreases the maximum population size, which can improve convergence performance. It is clear the the population size is at most $d+1$. These formulations of the two objective functions are identical to the ones in \cite{GSEMO2015} when $k=1$.

In practice, set solutions are represented in GSEMO as binary sequences, where with $V=\{v_1,\dots,v_n\}$ the following bijective mapping is implicitly assumed
\[g:2^V\to\{0,1\}^n,\quad g(X)_i=\begin{cases*}
0,& $v_i\notin X$,\\
1,& $v_i\in X$
\end{cases*}.\]
This representation of set is useful in evolutionary algorithms since genetic bit operators are compatible. GSEMO operates on the bit sequences, and the fitness function is effectively a pseudo-Boolean function $f\circ g^{-1}$. It starts with initial population of a single empty solution. In each iteration, a new solution is generated by random parent selection and bit flip mutation. Then the elitist survival selection mechanism removes dominated solutions from the population, effectively maintaining a set of known Pareto-optimal solutions. The algorithm terminates when the number of iteration reaches some predetermined limit. The procedure is described in Algorithm \ref{alg:gsemo}. We choose empty set as the initial solution, similar to \cite{POMC} and different from \cite{GSEMO2015}, to simplify the analysis and stabilize theoretical performance. Note that GSEMO calls the oracle once per iteration to evaluate a new solution, so its run time is identical to the number of iterations.

\section{Approximation guarantees}
\label{sec3}
We derive an approximation guarantee for GSEMO on maximizing a general submodular function under partition matroid constraints. According to the analysis for GREEDY \cite{GreedyMaxBoundCurvPartMatroid}, we can assume there are $d$ ``dummy'' elements with zero marginal contribution. For all feasible solutions $X\subseteq V$ where $|X|<\bar{d}$, let $v^*_X=\argmax_{v\in V\setminus X}f_1(X\cup\{v\})$ be the feasible greedy addition to $X$, we can derive the following result from Lemma 2 in \cite{MaxMonoApproxSubmodMulti}, using $f(OPT\cup X)\geq f(OPT)-j\epsilon_{d+j+1}$.
\begin{lemma}[\cite{MaxMonoApproxSubmodMulti}]\label{greed_prog_submodular}
Let $f$ be a submodular function and $\epsilon_d$ be defined in Definition \ref{mono_approx}, for all feasible solutions $X\subseteq V$ such that $|X|=j<\bar{d}$
\[f(X\cup\{v^*_X\})-f(X)\geq\frac{1}{d}[f(OPT)-f(X)-j\epsilon_{d+j+1}].\]
\end{lemma}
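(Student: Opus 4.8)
The plan is to lower-bound the greedy gain $f(X\cup\{v^*_X\})-f(X)$ by comparing the greedy addition with the elements of $OPT$ that can still be legally added to $X$. Since $|X|=j<\bar d$, in every block $B_i$ we have $|B_i\cap X|\leq j<\bar d\leq d_i$, so \emph{every} element of $OPT\setminus X$ is a feasible single-element addition to $X$; in particular $v^*_X$ is at least as good as the best element of $OPT\setminus X$, and hence at least as good as the average marginal gain over all elements of $OPT\setminus X$. Writing $OPT\setminus X=\{o_1,\dots,o_m\}$ with $m=|OPT\setminus X|\leq d$, this gives
\[
f(X\cup\{v^*_X\})-f(X)\;\geq\;\frac{1}{m}\sum_{\ell=1}^{m}\bigl[f(X\cup\{o_\ell\})-f(X)\bigr]\;\geq\;\frac{1}{d}\sum_{\ell=1}^{m}\bigl[f(X\cup\{o_\ell\})-f(X)\bigr],
\]
the last inequality because each summand is nonnegative (it is at most $f(X\cup\{v^*_X\})-f(X)$, which is $\geq 0$ since $f$ is assumed not non-increasing and dummy elements contribute zero) and $m\leq d$.

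Next I would telescope the sum using submodularity. Order the elements of $OPT\setminus X$ and use the standard fact that for a submodular $f$, $\sum_{\ell}[f(X\cup\{o_\ell\})-f(X)]\geq f(X\cup(OPT\setminus X))-f(X)=f(X\cup OPT)-f(X)$, since adding $o_\ell$ to the larger set $X\cup\{o_1,\dots,o_{\ell-1}\}$ yields a marginal gain no larger than adding it to $X$. Combining with the previous display,
\[
f(X\cup\{v^*_X\})-f(X)\;\geq\;\frac{1}{d}\bigl[f(X\cup OPT)-f(X)\bigr].
\]
Finally I invoke the monotonicity-approximation bound quoted in the text, $f(OPT\cup X)\geq f(OPT)-j\,\epsilon_{d+j+1}$: informally, going from $OPT\cup X$ back down to $OPT$ removes at most $j=|X|$ elements, each set along the way having size less than $d+j+1$, so each removal costs at most $\epsilon_{d+j+1}$ by Definition \ref{mono_approx}. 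Substituting yields exactly
\[
f(X\cup\{v^*_X\})-f(X)\;\geq\;\frac{1}{d}\bigl[f(OPT)-f(X)-j\,\epsilon_{d+j+1}\bigr].
\]

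The main obstacle is bookkeeping rather than a deep idea: one must be careful that the feasibility argument really uses $|X|<\bar d$ (so that no block constraint can already be tight), and that the indices in the $\epsilon$ term are correct — the relevant sets when peeling $X$ off of $OPT\cup X$ have size up to $|OPT|+|X|\leq d+j$, so their \emph{proper} subsets have size $<d+j+1$, matching the parameter $d+j+1$ in $\epsilon_{d+j+1}$. Since the result is attributed to Lemma~2 of \cite{MaxMonoApproxSubmodMulti} together with the stated inequality $f(OPT\cup X)\geq f(OPT)-j\epsilon_{d+j+1}$, the cleanest write-up simply cites that lemma for the first two displays and then applies the monotonicity-approximation inequality; I would present it that way, with the averaging-plus-telescoping steps sketched for completeness.
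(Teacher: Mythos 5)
Your overall route is the intended one (the paper itself gives no proof here, it just cites Lemma~2 of \cite{MaxMonoApproxSubmodMulti} and applies $f(OPT\cup X)\geq f(OPT)-j\epsilon_{d+j+1}$): feasibility of every single-element addition when $|X|=j<\bar d$, comparison of the greedy gain with the elements of $OPT\setminus X$, the submodular telescoping bound $\sum_{v\in OPT\setminus X}[f(X\cup\{v\})-f(X)]\geq f(OPT\cup X)-f(X)$, and the monotonicity-approximation term with the correct index $d+j+1$. However, there is one step whose justification is wrong as written: to pass from $\frac1m\sum_\ell[f(X\cup\{o_\ell\})-f(X)]$ to $\frac1d\sum_\ell[\cdot]$ you assert that \emph{each summand} is nonnegative ``because it is at most $f(X\cup\{v^*_X\})-f(X)\geq 0$''. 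That is a non sequitur (being bounded above by a nonnegative quantity does not make a number nonnegative), and it is genuinely false here: $f$ is not assumed monotone, so individual marginal gains $f(X\cup\{o_\ell\})-f(X)$ — and even their sum — can be negative, which is exactly why the $\epsilon$ term appears in the statement.

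The gap is easily repaired by reordering the argument so that only the nonnegativity of the \emph{greedy} gain (guaranteed by the dummy elements, which remain feasible since $j<\bar d\leq d$) is used: since every $o_\ell$ is a feasible addition, $f(X\cup\{v^*_X\})-f(X)\geq f(X\cup\{o_\ell\})-f(X)$ for each $\ell$, hence $m\bigl[f(X\cup\{v^*_X\})-f(X)\bigr]\geq\sum_\ell\bigl[f(X\cup\{o_\ell\})-f(X)\bigr]$, and then $d\bigl[f(X\cup\{v^*_X\})-f(X)\bigr]\geq m\bigl[f(X\cup\{v^*_X\})-f(X)\bigr]$ because $m\leq d$ and the greedy gain is $\geq 0$; combining with the telescoping bound and $f(OPT\cup X)\geq f(OPT)-j\epsilon_{d+j+1}$ gives the claim after dividing by $d$. (Alternatively, keep your averaging step but add the observation that if $f(OPT)-f(X)-j\epsilon_{d+j+1}\leq 0$ the claim is trivial since the greedy gain is nonnegative, and otherwise the sum is positive so $\frac1m\sum\geq\frac1d\sum$ is legitimate.) With that correction your proof is complete and matches the derivation the paper relies on.
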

With this lemma, we can prove the following result where $P_t$ denotes the population at iteration $t$.
\begin{theorem}\label{gsemo_submodular}
For the problem of maximizing a submodular function $f$ under partition matroid constraints, GSEMO generates in expected run time $\mathcal{O}(d^2n/k)$ a solution $X\subseteq V$ such that
\[f(X)\geq\left(1-e^{-\bar{d}/d}\right)\left[f(OPT)-(\bar{d}-1)\epsilon_{d+\bar{d}}\right].\]
\end{theorem}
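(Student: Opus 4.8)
The plan is to run the standard GSEMO ``chaining'' argument: define a sequence of increasingly strong targets indexed by set size and show the population reaches them one after another. Put $a_0=0$ and, for $j\ge 0$, $a_{j+1}=(1-1/d)a_j+\frac1d\bigl(f(OPT)-j\,\epsilon_{d+\bar d}\bigr)$, and say $P_t$ \emph{satisfies $Q(j)$} if it contains a feasible $X$ with $|X|\le j$ and $f(X)\ge a_j$. First I would record two structural facts. The empty set lies in $P_t$ for all $t$, since it is the unique solution with $f_2=0$ and hence is never dominated out; and since $f_1=-\infty$ on infeasible solutions while $f_1(\emptyset)=f(\emptyset)\ge 0$, the empty set strictly dominates every infeasible offspring, so \emph{no infeasible solution is ever accepted}. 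Consequently, whenever a feasible $Y$ is deleted it is replaced by some feasible $Z\succeq Y$, i.e. $|Z|\le|Y|$ and $f(Z)\ge f(Y)$; hence $Q(j)$, once true, stays true, and $Q(0)$ holds initially because $a_0=0\le f(\emptyset)$.

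Next I would bound the time to go from $Q(j)$ to $Q(j+1)$ when $j+1\le\bar d$. Let $X$ be a witness of $Q(j)$, with $|X|=j'\le j\le\bar d-1$. Choosing $X$ as parent has probability $\ge 1/(d+1)$ (the population never exceeds size $d+1$) and flipping exactly the bit of $v^*_X$ has probability $\ge 1/(en)$, so with probability $\ge\frac1{en(d+1)}$ the offspring is $X\cup\{v^*_X\}$, which is feasible since $|X\cap B_i|+1\le j'+1\le\bar d\le d_i$ for every $i$. Applying Lemma~\ref{greed_prog_submodular} with parameter $j'$, using that $\epsilon_{\cdot}$ is non-decreasing (so $\epsilon_{d+j'+1}\le\epsilon_{d+\bar d}$) and $j'\le j$, gives $f(X\cup\{v^*_X\})\ge(1-1/d)f(X)+\frac1d\bigl(f(OPT)-j\,\epsilon_{d+\bar d}\bigr)\ge a_{j+1}$; the offspring (or, if it is rejected, whatever dominates it) then has size $\le j+1$ and $f$-value $\ge a_{j+1}$, so $Q(j+1)$ holds afterwards. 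Each of the $\bar d$ phases thus takes expected at most $en(d+1)$ iterations, for a total of $O(\bar d\,d\,n)=O(d^2n/k)$ using $\bar d\le d/k$. Since GSEMO returns $\argmax_{x\in P}f_1(x)$, whose value is at least that of any $Q(\bar d)$ witness, the output $X^*$ satisfies $f(X^*)\ge a_{\bar d}$.

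It then remains to lower-bound $a_{\bar d}$. Unrolling the recursion, $a_{\bar d}=f(OPT)\bigl(1-(1-1/d)^{\bar d}\bigr)-\frac{\epsilon_{d+\bar d}}{d}\sum_{i=0}^{\bar d-1} i\,(1-1/d)^{\bar d-1-i}$, and substituting $m=\bar d-1-i$ gives $\sum_{i=0}^{\bar d-1} i(1-1/d)^{\bar d-1-i}\le(\bar d-1)\sum_{m=0}^{\bar d-1}(1-1/d)^m=(\bar d-1)\,d\bigl(1-(1-1/d)^{\bar d}\bigr)$, hence $a_{\bar d}\ge\bigl(1-(1-1/d)^{\bar d}\bigr)\bigl(f(OPT)-(\bar d-1)\epsilon_{d+\bar d}\bigr)$. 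If this bracket is non-negative, $(1-1/d)^{\bar d}\le e^{-\bar d/d}$ yields the claim; if it is negative, the claimed bound is itself negative while $f(X^*)\ge 0$, so the claim holds trivially.

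The \textbf{main obstacle} is the bookkeeping rather than any single estimate. One must ensure the ``good'' solution is never lost without a replacement that is at least as good and no larger — this is precisely what the feasibility/domination observations secure — and one must check that allowing a witness of size strictly below $j$ does not weaken the recursion: this works only because $\epsilon_\cdot$ is non-decreasing and the $\epsilon$-coefficient in Lemma~\ref{greed_prog_submodular} is the witness size, which can be safely inflated from $j'$ to $j$. Getting the final $\epsilon$-coefficient to come out as exactly $\bar d-1$ (rather than $\bar d$) also hinges on the summation bound in the last step.
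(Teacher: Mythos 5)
Your proposal is correct and follows essentially the same chaining argument as the paper's proof: a per-size progress invariant preserved by survival selection (a removed witness is replaced by a dominating feasible solution of no larger size and no smaller $f$-value), advanced by selecting the witness and flipping the bit of $v^*_X$ with probability at least $1/(en(d+1))$, yielding expected time $e\bar d n(d+1)=\mathcal{O}(d^2n/k)$ and the stated bound at level $\bar d$. The differences are only bookkeeping — you unroll a recursion with $\epsilon_{d+\bar d}$ frozen and sum geometric phase lengths, while the paper carries the invariant $f(X)\ge\left[1-(1-1/d)^j\right]\left[f(OPT)-(j-1)\epsilon_{d+j}\right]$ inductively and invokes additive drift — and your explicit treatment of infeasible offspring and of a possibly negative bracket is, if anything, slightly more careful than the paper's.
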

\begin{proof}
Let $S(X,j)$ be a statement $|X|\leq j\wedge f(X)\geq\left[1-\left(1-\frac{1}{d}\right)^{j}\right][f(OPT)-(j-1)\epsilon_{d+j}]$, and $J_t=\max\{i\in[0,\bar{d}]|\exists X\in P_t, S(X,i)\}$, it is clear that $S(\emptyset,0)$ holds, so $J_0=0$ and $J_t$ is well-defined for all $t\geq0$ since the empty solution is never dominated.

Assuming $J_t=i$ at some $t$, let $\bar{X}\in P_t$ such that $S(\bar{X},i)$ holds. If $\bar{X}$ is not dominated and removed from $P_{t+1}$, then $J_{t+1}\geq J_{t}$. Otherwise, there must be some $Y\in P_{t+1}$ such that $|Y|\leq|\bar{X}|$ and $f(Y)\geq f(\bar{X})$. This implies $S(Y,i)$, so $J_{t+1}\geq J_t$. Therefore, $J_t$ is never decreased as $t$ progresses. Let $X'=\bar{X}\cup\{v^*_{\bar{X}}\}$, Lemma \ref{greed_prog_submodular} implies
\begin{align*}
f(X')&\geq\frac{1}{d}f(OPT)+\left(1-\frac{1}{d}\right)\left[1-\left(1-\frac{1}{d}\right)^i\right][f(OPT)-(i-1)\epsilon_{d+i}]-\frac{i}{d}\epsilon_{d+i+1}\\&
\geq\left[1-\left(1-\frac{1}{d}\right)^{i+1}\right][f(OPT)-i\epsilon_{d+i+1}].
\end{align*}
The second inequality uses $0\leq\epsilon_{d+i}\leq\epsilon_{d+i+1}$. The probability that GSEMO selects $\bar{X}$ is at least $\frac{1}{d+1}$, and the probability of generating $X'$ by mutating $\bar{X}$ is at least $\frac{1}{n}\left(1-\frac{1}{n}\right)^{n-1}\geq\frac{1}{en}$. Furthermore, $S(X',i+1)$ holds as shown, so $J_{t+1}\geq i+1$ if $X'\in P_{t+1}$. Since $i\leq\bar{d}-1$ and $t\geq0$ are chosen arbitrarily, this means
\[E[J_{t+1}-J_t|J_t\in[0,\bar{d}-1]]\geq\frac{1}{en(d+1)},\quad\forall t\geq0.\]
Therefore, the Additive Drift Theorem \cite{DriftAnalysis} implies the expected number of iterations for $J_t$ to reach $\bar{d}$ from $0$ is at most $e\bar{d}n(d+1)$. When $J_t=\bar{d}$, $P_t$ must contain a feasible solution $Z$ such that
\begin{align*}
    f(Z)&\geq\left(1-e^{-\bar{d}/d}\right)[f(OPT)-(\bar{d}-1)\epsilon_{d+\bar{d}}].
\end{align*}
Therefore, GSEMO generates such a solution in expected run time at most $e\bar{d}n(d+1)=\mathcal{O}(d^2n/k)$.
\end{proof}
In case of a single cardinality constraint ($\bar{d}=d$), this approximation guarantee is at least as tight as the one for GSEMO-C in \cite{MaxMonoApproxSubmodMulti}. If monotonicity of $f$ is further assumed, the result is equivalent to the one for GSEMO in \cite{GSEMO2015}. Additionally, the presence of $\epsilon_d$ suggests that the non-monotonicity of $f$ does not necessarily worsen the approximation guarantee when negative marginal gains are absent from all GSEMO's possible solutions (i.e. cannot decrease objective values by adding an element).

As an extension beyond submodularity instead of monotonicity, we provide another proof of the approximation guarantee for GSEMO on the problems of maximizing monotone functions under the same constraints. Without loss of generality, we can assume that $f$ is normalized, meaning $f(\emptyset)=0$. We make use of the following inequality, derived from Lemma 1 in \cite{PPOSS}.
\begin{lemma}\label{greed_prog_monotone}
Let $f$ be a monotone function and $\gamma_{i,j}$ be defined in Definition \ref{submodratio_def}, for all feasible solutions $X\subseteq V$ such that $|X|=j<\bar{d}$
\[f(X\cup\{v^*_X\})-f(X)\geq\frac{\gamma_{j+1,d}}{d}[f(OPT)-f(X)].\]
\end{lemma}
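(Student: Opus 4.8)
The plan is to relate the best single greedy addition to $X$ with the aggregate marginal contribution of the elements of $OPT$, using the submodularity ratio $\gamma_{j+1,d}$ as the bridge. First I would record the feasibility observation that makes $v^*_X$ behave like an unconstrained greedy step: since $|X|=j<\bar{d}=\min_i\{d_i\}$, for every block $B_i$ we have $|X\cap B_i|\leq|X|<d_i$, so $X\cup\{v\}$ is feasible for \emph{every} $v\in V\setminus X$. Hence $f_1(X\cup\{v\})=f(X\cup\{v\})$ on all of $V\setminus X$, and $v^*_X$ is simply the element of $V\setminus X$ maximizing the marginal gain $f(X\cup\{v\})-f(X)$.

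Next I would invoke Definition \ref{submodratio_def} with the set $X$, which satisfies $|X|=j<j+1$, and with $L=OPT\setminus X$, which is disjoint from $X$ and satisfies $|L|\leq|OPT|\leq d$. This directly gives
\[\sum_{v\in OPT\setminus X}\left[f(X\cup\{v\})-f(X)\right]\geq\gamma_{j+1,d}\left[f\!\left(X\cup(OPT\setminus X)\right)-f(X)\right].\]
Monotonicity of $f$ then yields $f(X\cup(OPT\setminus X))=f(X\cup OPT)\geq f(OPT)$, so the right-hand side is at least $\gamma_{j+1,d}\left[f(OPT)-f(X)\right]$.

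To finish I would bound the left-hand side from above by the single largest term: by the definition of $v^*_X$, every summand satisfies $f(X\cup\{v\})-f(X)\leq f(X\cup\{v^*_X\})-f(X)$, and by monotonicity this maximal marginal gain is non-negative, so the sum of its $|OPT\setminus X|\leq d$ terms is at most $d\left[f(X\cup\{v^*_X\})-f(X)\right]$. Chaining the two estimates and dividing by $d$ gives the claim. The argument is short; the step requiring the most care is the feasibility observation in the first paragraph, since it is precisely the hypothesis $|X|=j<\bar{d}$ that lets us identify $v^*_X$ with the marginal-gain maximizer over all of $V\setminus X$ rather than over a restricted feasible subset — without it the comparison with each $v\in OPT\setminus X$ would fail. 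One may also assume, as in the analysis of GREEDY, that $d$ dummy elements of zero marginal contribution are available, so that $|OPT|\leq d$ causes no loss.
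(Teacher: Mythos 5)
Your argument is correct and is essentially the paper's route: the paper simply imports this inequality from Lemma 1 of the cited PPOSS work, whose proof is exactly your chain — apply Definition \ref{submodratio_def} with $L=OPT\setminus X$ (valid since $|X|=j<j+1$, $|L|\leq|OPT|\leq d$), use monotonicity to get $f(X\cup OPT)\geq f(OPT)$, and bound the sum by $d$ times the best single marginal gain, with the observation that $|X|<\bar{d}$ makes every single-element addition feasible so $v^*_X$ is the unrestricted marginal-gain maximizer. The only cosmetic remark is that the dummy-element assumption you append is unnecessary here, since $|OPT|\leq d$ already follows from feasibility of $OPT$.
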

Using this lemma, we similarly prove the following result.
\begin{theorem}\label{gsemo_monotone}
For the problem of maximizing a monotone function under partition matroid constraints, GSEMO with expected run time $\mathcal{O}(d^2n/k)$ generates a solution $X\subseteq V$ such that
\[f(X)\geq\left(1-e^{-\gamma_{\bar{d},d}\bar{d}/d}\right)f(OPT).\]
\end{theorem}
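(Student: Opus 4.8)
The plan is to reuse the drift template from the proof of Theorem~\ref{gsemo_submodular}, replacing the constant greedy-progress factor $1/d$ with the size-dependent factor $\gamma_{j+1,d}/d$ supplied by Lemma~\ref{greed_prog_monotone}. Concretely, since $f$ is normalized I would track the invariant
\[S(X,j):\quad |X|\leq j\ \wedge\ f(X)\geq\left[1-\prod_{l=1}^{j}\left(1-\frac{\gamma_{l,d}}{d}\right)\right]f(OPT),\]
with the empty product read as $1$, and define $J_t=\max\{i\in[0,\bar d]\mid\exists X\in P_t,\ S(X,i)\}$. Then $S(\emptyset,0)$ holds, so $J_0=0$, and because the empty solution is never dominated, $J_t$ is well defined for every $t$.

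First I would show $J_t$ is non-decreasing by the same argument as before: if a witness $\bar X$ for $J_t=i$ is removed, it is replaced by some $Y\in P_{t+1}$ with $|Y|\leq|\bar X|$ and $f(Y)\geq f(\bar X)$ (both feasible, so $f_1$ agrees with $f$), whence $S(Y,i)$ holds. Next, with $J_t=i\leq\bar d-1$ and witness $\bar X$, set $X'=\bar X\cup\{v^*_{\bar X}\}$ and feed $S(\bar X,i)$ into Lemma~\ref{greed_prog_monotone}; since each factor $1-\gamma_{l,d}/d$ is non-negative (for monotone $f$ one has $\gamma_{i,j}\leq j$, as each singleton gain is at most the whole-set gain), a one-line telescoping calculation yields
\[f(X')\geq\left[1-\prod_{l=1}^{i+1}\left(1-\frac{\gamma_{l,d}}{d}\right)\right]f(OPT),\]
that is, $S(X',i+1)$. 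GSEMO picks $\bar X$ with probability at least $1/(d+1)$ and produces exactly $X'$ with probability at least $1/(en)$, and $X'$ (or some solution dominating it, which also satisfies $S(\cdot,i+1)$) enters $P_{t+1}$, so $E[J_{t+1}-J_t\mid J_t\in[0,\bar d-1]]\geq 1/(en(d+1))$. The Additive Drift Theorem~\cite{DriftAnalysis} then bounds the expected time until $J_t=\bar d$ by $e\bar dn(d+1)=\mathcal{O}(d^2n/k)$, using $\bar d\leq d/k$.

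The only genuinely new step relative to Theorem~\ref{gsemo_submodular} is collapsing the product bound available once $J_t=\bar d$ into the stated closed form, and this is the step I would be most careful about. Because $\gamma_{i,d}$ is non-increasing in its first index, $1-\gamma_{l,d}/d\leq 1-\gamma_{\bar d,d}/d\leq e^{-\gamma_{\bar d,d}/d}$ for all $l\leq\bar d$, so $\prod_{l=1}^{\bar d}(1-\gamma_{l,d}/d)\leq e^{-\gamma_{\bar d,d}\bar d/d}$ and $P_t$ contains a feasible $Z$ with $f(Z)\geq(1-e^{-\gamma_{\bar d,d}\bar d/d})f(OPT)$, which the returned solution matches or beats. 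The remaining care is purely bookkeeping: matching indices correctly (progress from a set of size $j$ uses $\gamma_{j+1,d}$, so the product runs over $l=1,\dots,\bar d$) and applying $1-x\leq e^{-x}$ together with monotonicity of the submodularity ratio in the right direction.
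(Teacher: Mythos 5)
Your proof is correct and follows essentially the same route as the paper: the same invariant-based drift argument on $J_t$, the same use of Lemma~\ref{greed_prog_monotone} with selection probability $1/(d+1)$ and mutation probability $1/(en)$, and the Additive Drift Theorem giving the $e\bar{d}n(d+1)=\mathcal{O}(d^2n/k)$ bound. The only (harmless, slightly tidier) difference is that you carry the product-form invariant $1-\prod_{l=1}^{j}\left(1-\gamma_{l,d}/d\right)$ and invoke the monotonicity of $\gamma_{\cdot,d}$ only at the end to collapse it to $\left(1-e^{-\gamma_{\bar{d},d}\bar{d}/d}\right)$, whereas the paper uses the power form $1-\left(1-\gamma_{j,d}/d\right)^{j}$ and applies $\gamma_{i,d}\geq\gamma_{i+1,d}$ inside the induction step; your explicit check that $\gamma_{i,d}\leq d$ (so the factors are non-negative) is a point the paper leaves implicit.
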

\begin{proof}
Let $S(X,j)$ be a statement $|X|\leq j\wedge f(X)\geq\left[1-\left(1-\frac{\gamma_{j,d}\bar{d}}{d}\right)^{j}\right]f(OPT)$, and $J_t=\max\{i\in[0,\bar{d}]|\exists X\in P_t, S(X,i)\}$, it is clear that $S(\emptyset,0)$ holds, so $J_0=0$ and $J_t$ is well-defined for all $t\geq0$ since the empty solution is never dominated.

Assuming $J_t=i$ at some $t$, there must be $\bar{X}\in P_t$ such that $S(\bar{X},i)$ holds. If $\bar{X}$ is not dominated and removed from $P_{t+1}$, then $J_{t+1}\geq J_{t}$. Otherwise, there must be some $Y\in P_{t+1}$ such that $|Y|\leq|\bar{X}|$ and $f(Y)\geq f(\bar{X})$. This implies $S(Y,i)$, so $J_{t+1}\geq J_t$. Therefore, $J_t$ is never decreased as $t$ progresses. Let $X'=\bar{X}\cup\{v^*_{\bar{X}}\}$, Lemma \ref{greed_prog_monotone} implies
\begin{align*}
f(X')&\geq\frac{\gamma_{i+1,d}}{d}f(OPT)+\left(1-\frac{\gamma_{i+1,d}}{d}\right)\left[1-\left(1-\frac{\gamma_{i,d}}{d}\right)^i\right]f(OPT)\\&
\geq\left[1-\left(1-\frac{\gamma_{i+1,d}}{d}\right)^{i+1}\right]f(OPT).
\end{align*}
The second inequality uses $\gamma_{i,d}\geq\gamma_{i+1,d}$.The probability that GSEMO selects $\bar{X}$ is at least $\frac{1}{d+1}$, and the probability of generating $X'$ by mutating $\bar{X}$ is at least $\frac{1}{n}\left(1-\frac{1}{n}\right)^{n-1}\geq\frac{1}{en}$. Furthermore, $S(X',i+1)$ holds as shown, so $J_{t+1}\geq i+1$. Since $i\leq\bar{d}-1$ and $t\geq0$ are chosen arbitrarily, this means
\[E[J_{t+1}-J_t|J_t\in[0,\bar{d}-1]]\geq\frac{1}{en(d+1)},\quad\forall t\geq0.\]
Therefore, according to the Additive Drift Theorem \cite{DriftAnalysis}, the expected number of iterations for $J_t$ to reach $\bar{d}$ from $0$ is at most $e\bar{d}n(d+1)$. When $J_t=\bar{d}$, $P_t$ must contain a feasible solution $Z$ such that
\[f(Z)\geq\left[1-\left(1-\frac{\gamma_{\bar{d},d}}{d}\right)^{\bar{d}}\right]f(OPT)\geq\left(1-e^{-\gamma_{\bar{d},d}\bar{d}/d}\right)f(OPT).\]
Therefore, GSEMO generates such a solution in expected run time at most $e\bar{d}n(d+1)=\mathcal{O}(d^2n/k)$.
\end{proof}
Compared to the results in \cite{POMC}, it is reasonable to assume that restricting GSEMO's population to only allow feasible solutions improves worst-case guarantees. However, it also eliminates the possibility of efficient improvement by modifying infeasible solutions that are very close to very good feasible ones. This might reduce its capacity to overcome local optima.

\section{Experimental investigations}
\label{sec4}
We compare GSEMO and GREEDY on the symmetric submodular Cut maximization problems with randomly generated graphs under varying settings. The experiments are separated into two groups: cardinality constraints ($k=1$) and general partition matroid constraints ($k>1$). 

\subsection{Max Cut problems setup}

Weighted graphs are generated for the experiments based on two parameters: number of vertices (which is $n$) and density. There are 3 values for $n$: 50, 100, 200. There are 5 density values: 0.01, 0.02, 0.05, 0.1, 0.2. For each $n$-$density$ pair, 30 different weighted graphs -- each denoted as $G=(V,E,c)$ -- are generated with the following procedure:
\begin{enumerate}
    \item Randomly sample $E$ from $V\times V$ without replacement, until $|E|=\lfloor density\times n^2\rfloor$.
    \item Assign to $c(a,b)$ a uniformly random value in $[0,1]$ for each $(a,b)\in E$.
    \item Assign $c(a,b)=0$ for all $(a,b)\notin E$.
\end{enumerate}
Each graph is then paired with  different sets of constraints, and each pairing constitutes a problem instance. This enables observations of changes in outputs on the same graphs under varying constraints. For cardinality constraints, $d_1=\{\frac{n}{4},\frac{n}{2},\frac{3n}{4}\}$, rounded to the nearest integer. Thus, there are 30 problem instances per $n$-density-$d_1$ triplet. For partition matroid constraints, the numbers of partitions are $k=\{2,5,10\}$. The partitions are of the same size, and each element is randomly assigned to a partition. The thresholds $d_i$ are all set to $\lceil\frac{n}{2k}\rceil$ since the objective functions are symmetric. Likewise, there are 30 problem instances per $n$-density-$k$ triplet.

GSEMO is run on each instance 30 times, and the minimum, mean and maximum results are denoted by GSEMO$^-$, GSEMO$^*$ and GSEMO$^+$, respectively. The GREEDY algorithm is run until satisfying its stopping condition, while GSEMO is run for $T=4n^2$ iterations. Their final achieved objective values are then recorded and analyzed. Note that the run time budget for GSEMO in every setting is smaller than the theoretical bound on average run time in Theorem \ref{gsemo_submodular}, except for $(n,k)=(50,10)$ where it is only slightly larger.
\subsection{Cut maximization under a cardinality constraint}

\begin{table*}[htp!]
\begin{normalsize}
\renewcommand{\arraystretch}{.96}
\caption{Experimental results for cardinality constraints cases. Ranges of final objective values across 30 graphs are shown for each setting. The signed-rank U-tests are used to compare GREEDY's with GSEMO$^-$, GSEMO$^*$, GSEMO$^+$ for each setting, pairing by instances, with 95\% confidence level. `+' denotes values being significantly greater than GREEDY's, `-' denotes less, and `*' denotes no significant difference. Additionally, numbers of losses, wins and ties (L-W-T) GSEMO has over GREEDY are shown, which are determined by separate U-tests on individual instances.}
\label{table:experiments1}
\begin{adjustbox}{tabular=lllcccccccccccc,center}
\toprule
\multirow{2}{*}{$n$}&\multirow{2}{*}{density}&\multirow{2}{*}{$d_1$}&\multicolumn{2}{c}{GREEDY}&\multicolumn{3}{c}{GSEMO$^-$}&\multicolumn{3}{c}{GSEMO$^*$}&\multicolumn{3}{c}{GSEMO$^+$}&\multirow{2}{*}{L--W--T}\\\cmidrule(l{2pt}r{2pt}){4-5}\cmidrule(l{2pt}r{2pt}){6-8}\cmidrule(l{2pt}r{2pt}){9-11}\cmidrule(l{2pt}r{2pt}){12-14}&&&min&max&min&max&stat&min&max&stat&min&max&stat&\\\cmidrule(l{2pt}r{2pt}){1-15}
\multirow{15}{*}{\vspace{-38pt}50}&\multirow{3}{*}{0.01}&13&6.638&12.71&6.625&12.59&-&6.635&12.7&-&6.638&12.71&*&3--3--24\\&
&25&6.638&13.27&6.625&13.13&-&6.65&13.25&*&6.706&13.27&+&2--7--21\\&
&38&6.638&13.27&6.625&13.13&-&6.647&13.24&*&6.706&13.27&+&2--6--22\\\cmidrule(l{2pt}r{2pt}){2-15}
&\multirow{3}{*}{0.02}&13&12.16&18.08&12.1&18.08&-&12.15&18.08&*&12.17&18.08&+&6--5--19\\&
&25&13.5&20.27&13.43&20.17&-&13.47&20.29&+&13.5&20.33&+&8--16--6\\&
&38&13.5&20.27&13.39&20.23&-&13.47&20.3&+&13.5&20.33&+&7--17--6\\\cmidrule(l{2pt}r{2pt}){2-15}
&\multirow{3}{*}{0.05}&13&22.09&29.38&21.69&29.28&-&22.03&29.49&+&22.09&29.51&+&2--17--11\\&
&25&26.52&37.14&27.18&36.28&-&27.75&36.95&+&28.1&37.14&+&4--20--6\\&
&38&26.52&37.14&27.39&36.43&-&27.89&37.02&+&28.2&37.14&+&5--21--4\\\cmidrule(l{2pt}r{2pt}){2-15}
&\multirow{3}{*}{0.1}&13&38.64&47.14&38.19&46.82&-&38.59&47.12&*&38.69&47.14&+&5--11--14\\&
&25&46.61&57.93&45.36&57.38&-&46.85&57.85&+&47.28&58.03&+&11--16--3\\&
&38&46.61&58.08&45.03&57.58&-&46.77&57.95&+&47.28&58.1&+&10--17--3\\\cmidrule(l{2pt}r{2pt}){2-15}
&\multirow{3}{*}{0.2}&13&63.13&77.38&63.09&77.01&*&63.26&77.43&+&63.46&77.64&+&2--16--12\\&
&25&78.89&92.57&79.37&91.68&-&80.61&92.25&+&80.82&92.66&+&5--21--4\\&
&38&78.89&92.57&79.82&91.78&-&80.62&92.31&+&80.82&92.66&+&5--21--4\\\cmidrule(l{2pt}r{2pt}){1-15}
\multirow{15}{*}{\vspace{-37pt}100}&\multirow{3}{*}{0.01}&25&24.93&31.88&24.9&31.86&-&25.11&31.89&*&25.36&31.89&+&7--9--14\\&
&50&27.87&37.79&27.56&37.87&-&28.11&38.07&*&28.67&38.26&+&12--13--5\\&
&75&27.87&37.79&27.24&37.71&-&28.07&38&+&28.67&38.26&+&11--13--6\\\cmidrule(l{2pt}r{2pt}){2-15}
&\multirow{3}{*}{0.02}&25&42.95&53.66&42.95&53.56&-&43.38&53.64&*&43.4&53.66&+&7--10--13\\&
&50&51.93&66&51.95&65.27&-&52.62&66.14&+&52.78&66.4&+&9--15--6\\&
&75&51.93&66&51.69&64.31&-&52.6&66.08&+&52.8&66.4&+&9--17--4\\\cmidrule(l{2pt}r{2pt}){2-15}
&\multirow{3}{*}{0.05}&25&78.13&94.98&78.09&95.3&-&78.38&95.45&+&78.78&95.49&+&8--17--5\\&
&50&100.6&120.7&100&119&-&100.9&120.3&+&101.7&120.7&+&7--18--5\\&
&75&100.7&120.7&99.23&118.9&-&100.8&120.4&+&101.9&120.7&+&7--17--6\\\cmidrule(l{2pt}r{2pt}){2-15}
&\multirow{3}{*}{0.1}&25&138.9&155.3&138.5&155&-&139.1&156&+&139.6&156.2&+&2--20--8\\&
&50&178.4&197.8&177&198.3&*&178&199.9&+&179.6&200.6&+&7--17--6\\&
&75&178.4&197.8&176.6&198.6&*&178&199.9&+&179.4&200.6&+&6--17--7\\\cmidrule(l{2pt}r{2pt}){2-15}
&\multirow{3}{*}{0.2}&25&224.1&249.2&222.8&249.4&*&224&250&+&225.6&250.2&+&4--22--4\\&
&50&297.6&325.1&297.8&323.9&*&300.9&325.8&+&302.7&326.4&+&6--20--4\\&
&75&297.6&325.1&298&323.9&-&300.4&325.8&+&303.2&326.4&+&6--19--5\\\cmidrule(l{2pt}r{2pt}){1-15}
\multirow{15}{*}{\vspace{-37pt}200}&\multirow{3}{*}{0.01}&50&85.54&96.11&84.98&96.05&*&85.58&96.3&+&85.84&96.52&+&7--20--3\\&
&100&103.1&118.4&103.5&118.5&-&104.6&120.1&+&104.9&121.4&+&4--23--3\\&
&150&103.1&118.4&103.6&118.6&-&104.7&120&+&105.1&121.4&+&4--21--5\\\cmidrule(l{2pt}r{2pt}){2-15}
&\multirow{3}{*}{0.02}&50&139.1&159.3&140.6&158.8&*&141.8&159.2&+&142.2&159.3&+&8--19--3\\&
&100&173.3&198.2&175.6&197.5&*&177.3&198.5&+&179.4&199.3&+&2--25--3\\&
&150&173.3&198.2&174.8&197.2&*&177&198.4&+&178.9&199.5&+&2--23--5\\\cmidrule(l{2pt}r{2pt}){2-15}
&\multirow{3}{*}{0.05}&50&275.9&311.8&277.8&311.4&+&278.8&312.3&+&280&313&+&0--28--2\\&
&100&357&400.6&364.4&402.6&*&367.1&405&+&369.7&406.7&+&1--28--1\\&
&150&357&400.6&364.2&402.7&*&366.8&405.2&+&369.9&407.1&+&0--27--3\\\cmidrule(l{2pt}r{2pt}){2-15}
&\multirow{3}{*}{0.1}&50&489.8&534&490.6&533.7&*&492.6&534&+&493.4&534&+&6--22--2\\&
&100&647.8&680.5&643.7&679.3&-&649.5&683.7&+&653&686.4&+&2--24--4\\&
&150&648&680.5&642.3&680.9&*&649.6&683.8&+&652.2&687&+&2--22--6\\\cmidrule(l{2pt}r{2pt}){2-15}
&\multirow{3}{*}{0.2}&50&866.9&921.8&867.9&920.6&*&871.5&921.6&+&873.4&921.8&+&1--26--3\\&
&100&1120&1182&1120&1181&+&1125&1188&+&1128&1192&+&0--29--1\\&
&150&1120&1182&1122&1181&+&1125&1189&+&1129&1193&+&0--30--0\\\cmidrule(l{2pt}r{2pt}){1-15}
\end{adjustbox}
\end{normalsize}
\end{table*}

\begin{table*}[htp!]
\begin{normalsize}
\renewcommand{\arraystretch}{.96}
\centering
\caption{Experimental results for partition matroid constraints cases. Ranges of final objective values across 30 graphs are shown for each setting. The signed-rank U-tests are used to compare GREEDY's with GSEMO$^-$, GSEMO$^*$, GSEMO$^+$ for each setting, pairing by instances, with 95\% confidence level. `+' denotes values being significantly greater than GREEDY's, `-' denotes less, and `*' denotes no significant difference. Additionally, numbers of losses, wins and ties (L-W-T) GSEMO has over GREEDY are shown, which are determined by separate U-tests on individual instances.}
\label{table:experiments11}
\begin{adjustbox}{tabular=lllcccccccccccc,center}
\toprule
\multirow{2}{*}{$n$}&\multirow{2}{*}{density}&\multirow{2}{*}{$k$}&\multicolumn{2}{c}{GREEDY}&\multicolumn{3}{c}{GSEMO$^-$}&\multicolumn{3}{c}{GSEMO$^*$}&\multicolumn{3}{c}{GSEMO$^+$}&\multirow{2}{*}{L--W--T}\\\cmidrule(l{2pt}r{2pt}){4-5}\cmidrule(l{2pt}r{2pt}){6-8}\cmidrule(l{2pt}r{2pt}){9-11}\cmidrule(l{2pt}r{2pt}){12-14}&&&min&max&min&max&stat&min&max&stat&min&max&stat&\\\cmidrule(l{2pt}r{2pt}){1-15}
\multirow{15}{*}{\vspace{-38pt}50}&\multirow{3}{*}{0.01}&2&6.638&13.27&6.625&13.15&-&6.65&13.26&*&6.706&13.27&+&1--7--22\\&
&5&6.638&13.27&6.625&13.13&-&6.654&13.25&*&6.706&13.27&+&3--7--20\\&
&10&6.638&13.27&6.625&13.15&-&6.65&13.25&*&6.706&13.27&+&1--7--22\\\cmidrule(l{2pt}r{2pt}){2-15}
&\multirow{3}{*}{0.02}&2&13.5&20.27&13.32&20.24&-&13.47&20.28&+&13.5&20.33&+&7--16--7\\&
&5&13.5&19.69&13.14&19.53&-&13.4&19.65&+&13.5&19.69&+&7--13--10\\&
&10&13.11&20.27&12.94&20.07&-&13.09&20.28&+&13.11&20.33&+&7--14--9\\\cmidrule(l{2pt}r{2pt}){2-15}
&\multirow{3}{*}{0.05}&2&25.8&37.14&26.63&36.82&-&27.07&37.07&+&27.42&37.14&+&5--18--7\\&
&5&25.78&36.75&26.49&35.86&-&27.04&36.58&+&27.66&36.88&+&8--18--4\\&
&10&25.9&36.83&25.73&35.28&-&27.38&36.17&+&28.09&36.83&+&6--20--4\\\cmidrule(l{2pt}r{2pt}){2-15}
&\multirow{3}{*}{0.1}&2&46.61&57.93&44.87&56.97&-&46.72&57.7&+&47.28&58.05&+&9--18--3\\&
&5&45.91&56.69&45.3&55.85&-&46.23&56.4&+&46.81&56.94&+&8--18--4\\&
&10&46.59&56.46&44.13&55.73&-&46.41&56.6&+&47.08&57.13&+&6--18--6\\\cmidrule(l{2pt}r{2pt}){2-15}
&\multirow{3}{*}{0.2}&2&78.89&92.5&79.16&91.53&-&80.54&92.18&+&80.82&92.66&+&6--19--5\\&
&5&74.37&91.89&74.96&90.83&-&76.44&91.61&+&77.39&91.89&+&6--22--2\\&
&10&75.85&92.57&75.62&90.86&-&76.7&92.08&+&77.97&92.51&+&4--21--5\\\cmidrule(l{2pt}r{2pt}){1-15}
\multirow{15}{*}{\vspace{-37pt}100}&\multirow{3}{*}{0.01}&2&27.87&37.79&27.69&37.7&-&28.15&37.99&+&28.67&38.26&+&10--14--6\\&
&5&27.87&37.79&27.78&37.65&-&28.1&38&+&28.67&38.26&+&10--14--6\\&
&10&27.87&36.81&27.59&36.51&-&27.88&36.91&*&28.31&37.14&+&11--12--7\\\cmidrule(l{2pt}r{2pt}){2-15}
&\multirow{3}{*}{0.02}&2&51.92&66&51.64&65.44&-&52.55&66.17&+&52.79&66.4&+&10--15--5\\&
&5&51.62&65.82&51.57&65.48&-&52.41&66.04&*&52.71&66.22&+&8--12--10\\&
&10&51.69&63.41&51.38&62.63&-&51.9&63.29&*&52.19&63.75&+&8--13--9\\\cmidrule(l{2pt}r{2pt}){2-15}
&\multirow{3}{*}{0.05}&2&100.1&120.7&99.68&119.3&-&100.9&120.2&+&101.7&120.7&+&8--17--5\\&
&5&99.61&119.5&98.19&117.4&-&99.86&119&+&100.6&119.7&+&7--17--6\\&
&10&97&116.6&95.82&116&-&98.14&117.6&+&99.37&118.3&+&5--18--7\\\cmidrule(l{2pt}r{2pt}){2-15}
&\multirow{3}{*}{0.1}&2&177.6&197.8&176.8&198.6&*&177.6&199.9&+&178.4&200.7&+&6--19--5\\&
&5&173.5&196.6&172.6&197.6&-&174.3&199.1&+&175.8&200.2&+&2--21--7\\&
&10&173.3&192.9&171.7&193.9&-&173.6&195.9&*&175.3&198.2&+&12--12--6\\\cmidrule(l{2pt}r{2pt}){2-15}
&\multirow{3}{*}{0.2}&2&294.2&325.1&294.8&324.8&*&297.4&325.9&+&301.6&326.4&+&6--22--2\\&
&5&292.7&320.5&293.2&318.5&*&297.3&321.8&+&299.5&323.7&+&3--23--4\\&
&10&288.3&322.7&288.2&317.3&-&292.3&321.6&+&296.3&323.8&+&5--22--3\\\cmidrule(l{2pt}r{2pt}){1-15}
\multirow{15}{*}{\vspace{-37pt}200}&\multirow{3}{*}{0.01}&2&103.1&118.4&103.6&118.9&*&104.6&120.2&+&105.1&121.6&+&4--22--4\\&
&5&103.1&118.4&103.4&118.5&-&104.2&119.9&+&104.7&121.4&+&3--20--7\\&
&10&102.7&117.4&102.6&117.2&-&104&118.1&+&104.7&119.3&+&4--21--5\\\cmidrule(l{2pt}r{2pt}){2-15}
&\multirow{3}{*}{0.02}&2&173.3&198.2&174.8&197.2&*&176.5&198.5&+&178.6&199.5&+&3--22--5\\&
&5&172.8&196.7&173.7&195.7&-&176.4&197.6&+&179.1&198.6&+&2--20--8\\&
&10&172.5&193.3&173.7&193.9&*&176.1&195.6&+&177.9&196.8&+&1--23--6\\\cmidrule(l{2pt}r{2pt}){2-15}
&\multirow{3}{*}{0.05}&2&356.4&400.6&362.7&401.6&*&366.2&404.8&+&369.1&406.1&+&2--28--0\\&
&5&353.7&399.4&359.6&400.5&*&363.5&403&+&365.5&404.7&+&1--28--1\\&
&10&352.9&394.3&355.1&396.7&*&360.1&399.4&+&363.2&401.5&+&1--26--3\\\cmidrule(l{2pt}r{2pt}){2-15}
&\multirow{3}{*}{0.1}&2&645.2&680.5&642.8&679&*&647.4&682.9&+&650.7&685.5&+&3--25--2\\&
&5&641.7&678.7&637.4&676.2&-&643.3&680.6&+&647.4&685.2&+&4--20--6\\&
&10&637.2&669.8&632.3&667.4&-&641.5&672.7&+&645.1&677.3&+&2--24--4\\\cmidrule(l{2pt}r{2pt}){2-15}
&\multirow{3}{*}{0.2}&2&1119&1182&1118&1183&+&1123&1187&+&1128&1193&+&0--30--0\\&
&5&1117&1178&1116&1173&*&1121&1179&+&1125&1184&+&0--29--1\\&
&10&1105&1170&1111&1175&*&1117&1181&+&1122&1188&+&2--28--0\\\cmidrule(l{2pt}r{2pt}){1-15}
\end{adjustbox}
\end{normalsize}
\end{table*}

The experimental results for cardinality constraint cases are shown in Table \ref{table:experiments1}. Signed-rank U-tests \cite{Corder09} are applied to the outputs, with pairing based on instances. Furthermore, we count the numbers of instances where GSEMO outperforms, ties with, and is outperformed by GREEDY via separate U-tests on individual instances.

Overall, GSEMO on average outperforms GREEDY with statistical significance in most cases. The significance seems to increase, with some noises, with increasing $n$, density, and $d_1$. This indicates that GSEMO more reliably produces better solutions than GREEDY as the graph's size and density increase. Moreover, in few cases with large $n$, GSEMO$^-$ is higher than GREEDY's with statistical significance.

Additionally, it is indicated that GSEMO$^*$ tend to be closer to GSEMO$^+$ than GSEMO$^-$. This suggests skewed distribution of outputs in each instance toward high values. The implication is that GSEMO is more likely to produce outputs greater than average, than otherwise. It might be an indication that these results are close to the global optima for these instances.

Per instance analyses show high number of ties between GSEMO and GREEDY for small $n$, density, and to a lesser extent $d_1$. As these increase, the number of GSEMO's wins increases and ends up dominating at $n=200$. This trend coincides with earlier observations, and suggests the difficulty of making improvements in sparse graphs faced by GSEMO where GREEDY heuristic seems more suited. On the other hand, large graph sizes seem to favour GSEMO over GREEDY despite high sparsity, likely due to more local optima present in larger search spaces.

\subsection{Cut maximization under partition matroid constraints}
The experimental results for partition matroid constraints cases are shown in Table \ref{table:experiments11}. Notations and statistical test procedure are the same as in Table \ref{table:experiments1}.

Overall, the main trend in cardinality constraint cases is present: GSEMO on average outperforms GREEDY, with increased reliability at larger $n$ and density. This can be observed in both the average performances and the frequency at which GSEMO beats GREEDY. It seems the effect of this phenomenon is less influenced by variations in $k$ than it is by variations in $d_1$ in cardinality constraint cases. Note that the analysis in Theorem \ref{gsemo_submodular} only considers bottom-up improvements up to $|X|=\bar{d}$. Experimental results likely suggest GSEMO can make similar improvements beyond that point up to $|X|=d$.

Additionally, the outputs of both algorithms are generally decreased with increased $k$, due to restricted feasible solution spaces. There are few exceptions which could be attributed to noises from random partitioning since they occur in both GREEDY's and GSEMO's simultaneously. Furthermore, the variation in $k$ seems to slightly affect the gaps between GSEMO's and GREEDY's, which are somewhat smaller at higher $k$. It seems to support the notion that GREEDY performs well in small search spaces while GSEMO excels in large search spaces. This coincides with the observations in the cardinality constraint cases, and explains the main trend.

\section{Conclusion}
In this work, we consider the problem of maximizing a set function under partition matroid constraints, and analyze GSEMO's approximation performance on such problems. Theoretical performance guarantees are derived for GSEMO in cases of submodular objective functions, and monotone objective functions. We show that GSEMO guarantees good approximation quality within polynomial expected run time in both cases. Additionally, experiments with Max Cut instances generated from varying settings have been conducted to gain insight on its empirical performance, based on comparison against simple GREEDY's. The results show that GSEMO generally outperforms GREEDY within quadratic run time, particularly when the feasible solution space is large.

\section*{Acknowledgements} 
The experiments were run using the HPC service provided by the University of Adelaide.
%
%

%
%
%
\bibliographystyle{unsrt}
\bibliography{ref}

\begin{thebibliography}{10}

\bibitem{Auger11}
{Auger, A, Doerr, B. (Eds.)}.
\newblock {\em Theory of Randomized Search Heuristics: Foundations and Recent
  Developments}.
\newblock World Scientific Publishing Co., Inc., 2011.

\bibitem{ncs/Jansen13}
Thomas Jansen.
\newblock {\em Analyzing Evolutionary Algorithms - The Computer Science
  Perspective}.
\newblock Natural Computing Series. Springer, 2013.

\bibitem{DBLP:books/daglib/0025643}
Frank Neumann and Carsten Witt.
\newblock {\em Bioinspired Computation in Combinatorial Optimization}.
\newblock Natural Computing Series. Springer, 2010.

\bibitem{BookDoeNeu}
{Doerr, B., Neumann, F. (Eds.)}.
\newblock {\em Theory of Evolutionary Computation -- Recent Developments in
  Discrete Optimization}.
\newblock Natural Computing Series. Springer, 2020.

\bibitem{DBLP:books/sp/ZhouYQ19}
Zhi{-}Hua Zhou, Yang Yu, and Chao Qian.
\newblock {\em Evolutionary Learning: Advances in Theories and Algorithms}.
\newblock Springer, 2019.

\bibitem{submodconvex}
L{\'a}szl{\'o} Lov{\'a}sz.
\newblock {\em Submodular functions and convexity}, pages 235--257.
\newblock Springer Berlin Heidelberg, Berlin, Heidelberg, 1983.

\bibitem{SubmodMin}
Satoru Iwata, Lisa Fleischer, and Satoru Fujishige.
\newblock A combinatorial strongly polynomial algorithm for minimizing
  submodular functions.
\newblock {\em Journal of the ACM}, 48(4):761--777, 07 2001.

\bibitem{MaxcutSDP}
Michel~X. Goemans and David~P. Williamson.
\newblock Improved approximation algorithms for maximum cut and satisfiability
  problems using semidefinite programming.
\newblock {\em Journal of the ACM}, 42(6):1115–--1145, 11 1995.

\bibitem{MaxSpreadInfluence}
David Kempe, Jon Kleinberg, and \'{E}va Tardos.
\newblock Maximizing the spread of influence through a social network.
\newblock In {\em Proceedings of the 9th ACM SIGKDD International Conference on
  Knowledge Discovery and Data Mining}, KDD '03, pages 137--146, New York, NY,
  USA, 2003. ACM.

\bibitem{NearSensorPlacement}
Andreas Krause, Ajit Singh, and Carlos Guestrin.
\newblock Near-optimal sensor placements in gaussian processes: Theory,
  efficient algorithms and empirical studies.
\newblock {\em Journal of Machine Learning Research}, 9:235--284, 02 2008.

\bibitem{SubInfoGather}
Andreas Krause and Carlos Guestrin.
\newblock Submodularity and its applications in optimized information
  gathering.
\newblock {\em ACM Transactions on Intelligent Systems and Technology},
  2(4):32:1--32:20, 07 2011.

\bibitem{FeatureSelection}
Yuzong Liu, Kai Wei, Katrin Kirchhoff, Yisong Song, and Jeff Bilmes.
\newblock Submodular feature selection for high-dimensional acoustic score
  spaces.
\newblock In {\em 2013 IEEE International Conference on Acoustics, Speech and
  Signal Processing}, pages 7184--7188, 05 2013.

\bibitem{DataSubsetSelection}
Kai Wei, Rishabh Iyer, and Jeff Bilmes.
\newblock Submodularity in data subset selection and active learning.
\newblock In {\em Proceedings of the 32nd International Conference on
  International Conference on Machine Learning - Volume 37}, ICML '15, pages
  1954--1963. JMLR.org, 2015.

\bibitem{DocSum}
Hui Lin and Jeff Bilmes.
\newblock A class of submodular functions for document summarization.
\newblock In {\em Proceedings of the 49th Annual Meeting of the Association for
  Computational Linguistics: Human Language Technologies}, HLT ’11, pages
  510--520, Portland, Oregon, USA, 06 2011. Association for Computational
  Linguistics.

\bibitem{DocSumBudget}
Hui Lin and Jeff Bilmes.
\newblock Multi-document summarization via budgeted maximization of submodular
  functions.
\newblock In {\em Human Language Technologies: The 2010 Annual Conference of
  the North American Chapter of the Association for Computational Linguistics},
  HLT ’10, pages 912--–920, USA, 2010. Association for Computational
  Linguistics.

\bibitem{EfficientMinDecompSubmodular}
Peter Stobbe and Andreas Krause.
\newblock Efficient minimization of decomposable submodular functions.
\newblock In {\em Proceedings of the 23rd International Conference on Neural
  Information Processing Systems - Volume 2}, NIPS '10, pages 2208--2216, USA,
  10 2010. Curran Associates Inc.

\bibitem{NonMonoSubmodularMaxMatroidKnap}
Jon Lee, Vahab~S. Mirrokni, Viswanath Nagarajan, and Maxim Sviridenko.
\newblock Non-monotone submodular maximization under matroid and knapsack
  constraints.
\newblock In {\em Proceedings of the 41st Annual ACM Symposium on Theory of
  Computing}, STOC '09, pages 323--332, New York, NY, USA, 02 2009. ACM.

\bibitem{CouplingEdgeGraphCut}
Stefanie Jegelka and Jeff Bilmes.
\newblock Submodularity beyond submodular energies: Coupling edges in graph
  cuts.
\newblock In {\em CVPR 2011}, CVPR '11, pages 1897--1904, 06 2011.

\bibitem{MaxCoverGroupBudget}
Chandra Chekuri and Amit Kumar.
\newblock Maximum coverage problem with group budget constraints and
  applications.
\newblock In Klaus Jansen, Sanjeev Khanna, Jos{\'e} D.~P. Rolim, and Dana Ron,
  editors, {\em Approximation, Randomization, and Combinatorial Optimization.
  Algorithms and Techniques}, pages 72--83, Berlin, Heidelberg, 2004. Springer
  Berlin Heidelberg.

\bibitem{TightApproxAlgoMaxGenAsg}
Lisa Fleischer, Michel~X. Goemans, Vahab~S. Mirrokni, and Maxim Sviridenko.
\newblock Tight approximation algorithms for maximum general assignment
  problems.
\newblock In {\em Proceedings of the 17th Annual ACM-SIAM Symposium on Discrete
  Algorithm}, SODA '06, pages 611--620, Philadelphia, PA, USA, 01 2006. Society
  for Industrial and Applied Mathematics.

\bibitem{LocationBankOptFloat}
Gerard Cornuejols, Marshall~L. Fisher, and George~L. Nemhauser.
\newblock Location of bank accounts to optimize float: An analytic study of
  exact and approximate algorithms.
\newblock {\em Management Science}, 23(8):789--810, 04 1977.

\bibitem{MaxSubmodularMatroid}
Gruia C\u{a}linescu, Chandra Chekuri, Martin P\'{a}l, and Jan Vondr\'{a}k.
\newblock Maximizing a monotone submodular function subject to a matroid
  constraint.
\newblock {\em SIAM Journal on Computing}, 40(6):1740--1766, 12 2011.

\bibitem{DeterministicSubMax}
Niv Buchbinder, Moran Feldman, and Mohit Garg.
\newblock Deterministic $(1/2 + \epsilon)$-approximation for submodular
  maximization over a matroid.
\newblock In {\em Proceedings of the 30th Annual ACM-SIAM Symposium on Discrete
  Algorithms}, SODA '19, pages 241--254, Philadelphia, PA, USA, 2019. Society
  for Industrial and Applied Mathematics.

\bibitem{GreedyMaxBoundCurvPartMatroid}
Tobias Friedrich, Andreas G{\"{o}}bel, Frank Neumann, Francesco Quinzan, and
  Ralf Rothenberger.
\newblock Greedy maximization of functions with bounded curvature under
  partition matroid constraints.
\newblock {\em Proceedings of the AAAI Conference on Artificial Intelligence},
  33:2272--2279, 07 2019.

\bibitem{SubGreedGeneralRado}
Michele Conforti and G\'{e}rard Cornu\'{e}jols.
\newblock Submodular set functions, matroids and the greedy algorithm: Tight
  worst-case bounds and some generalizations of the rado-edmonds theorem.
\newblock {\em Discrete Applied Mathematics}, 7(3):251--274, 1984.

\bibitem{SubCurvOpt}
Jan Vondr\'{a}k.
\newblock Submodularity and curvature: the optimal algorithm.
\newblock {\em RIMS K\^{o}ky\^{u}roku Bessatsu}, B23:253--266, 01 2010.

\bibitem{GreedNonmod}
Andrew~An Bian, Joachim~M. Buhmann, Andreas Krause, and Sebastian Tschiatschek.
\newblock Guarantees for greedy maximization of non-submodular functions with
  applications.
\newblock In {\em Proceedings of the 34th International Conference on Machine
  Learning - Volume 70}, ICML'17, pages 498–--507. JMLR.org, 2017.

\bibitem{GSEMO2015}
Tobias Friedrich and Frank Neumann.
\newblock Maximizing submodular functions under matroid constraints by
  evolutionary algorithms.
\newblock {\em Evolutionary Computation}, 23(4):543--558, 12 2015.

\bibitem{MaxMonoApproxSubmodMulti}
Chao Qian, Yang Yu, Ke~Tang, Xin Yao, and Zhi-Hua Zhou.
\newblock Maximizing submodular or monotone approximately submodular functions
  by multi-objective evolutionary algorithms.
\newblock {\em Artificial Intelligence}, 275:279--294, 2019.

\bibitem{ItemPrice}
Maria-Florina Balcan, Avrim Blum, and Yishay Mansour.
\newblock Item pricing for revenue maximization.
\newblock In {\em Proceedings of the 9th ACM Conference on Electronic
  Commerce}, EC '08, pages 50--59, New York, NY, USA, 2008. ACM.

\bibitem{WelfareGuaranteesCombAuc}
Kshipra Bhawalkar and Tim Roughgarden.
\newblock Welfare guarantees for combinatorial auctions with item bidding.
\newblock In {\em Proceedings of the 22nd Annual ACM-SIAM Symposium on Discrete
  Algorithms}, SODA '11, pages 700--709, Philadelphia, PA, USA, 10 2011.
  Society for Industrial and Applied Mathematics.

\bibitem{ComposeEffiMech}
Vasilis Syrgkanis and \'{E}va Tardos.
\newblock Composable and efficient mechanisms.
\newblock In {\em Proceedings of the 45th Annual ACM Symposium on Theory of
  Computing}, STOC '13, pages 211--220, New York, NY, USA, 2013. ACM.

\bibitem{NonMonoSubMaxCut}
Uriel Feige, Vahab~S. Mirrokni, and Jan Vondr\'{a}k.
\newblock Maximizing non-monotone submodular functions.
\newblock {\em SIAM Journal on Computing}, 40(4):1133--1153, 07 2011.

\bibitem{ComMinSymSub}
Maurice Queyranne.
\newblock A combinatorial algorithm for minimizing symmetric submodular
  functions.
\newblock In {\em Proceedings of the 6th Annual ACM-SIAM Symposium on Discrete
  Algorithms}, SODA '95, pages 98--101, USA, 1995. Society for Industrial and
  Applied Mathematics.

\bibitem{BestAlgoApproxMaxSubmodular}
George~L. Nemhauser and Laurence~A. Wolsey.
\newblock Best algorithms for approximating the maximum of a submodular set
  function.
\newblock {\em Mathematics of Operations Research}, 3(3):177--188, 08 1978.

\bibitem{spectral}
Abhimanyu Das and David Kempe.
\newblock Submodular meets spectral: Greedy algorithms for subset selection,
  sparse approximation and dictionary selection.
\newblock In {\em Proceedings of the 28th International Conference on
  International Conference on Machine Learning}, ICML'11, pages 1057--1064,
  Madison, WI, USA, 2011. Omnipress.

\bibitem{SEMO}
Marco Laumanns, Lothar Thiele, and Eckart Zitzler.
\newblock Running time analysis of multiobjective evolutionary algorithms on
  pseudo-boolean functions.
\newblock {\em IEEE Transactions on Evolutionary Computation}, 8(2):170--182,
  04 2004.

\bibitem{GSEMO}
Tobias Friedrich, Jun He, Nils Hebbinghaus, Frank Neumann, and Carsten Witt.
\newblock Approximating covering problems by randomized search heuristics using
  multi-objective models*.
\newblock {\em Evolutionary Computation}, 18(4):617--–633, 12 2010.

\bibitem{POMC}
Chao Qian, Jing-Cheng Shi, Yang Yu, and Ke~Tang.
\newblock On subset selection with general cost constraints.
\newblock In {\em Proceedings of the 26th International Joint Conference on
  Artificial Intelligence, {IJCAI-17}}, pages 2613--2619, 08 2017.

\bibitem{DriftAnalysis}
Johannes Lengler.
\newblock Drift analysis.
\newblock {\em CoRR}, abs/1712.00964, 2017.

\bibitem{PPOSS}
Chao Qian, Jing-Cheng Shi, Yang Yu, Ke~Tang, and Zhi-Hua Zhou.
\newblock Parallel pareto optimization for subset selection.
\newblock In {\em Proceedings of the Twenty-Fifth International Joint
  Conference on Artificial Intelligence}, IJCAI'16, pages 1939--1945. AAAI
  Press, 2016.

\bibitem{Corder09}
Gregory~W. Corder and Dale~I. Foreman.
\newblock {\em {Nonparametric Statistics for Non-Statisticians: A Step-by-Step
  Approach}}.
\newblock Wiley, 2009.

\end{thebibliography}
\end{document}